\crefname{section}{\S}{\S\S}
\Crefname{section}{Section}{Sections}
\Crefname{table}{Table}{Tables}
\crefname{table}{Tab.}{Tabs.}
\DeclareMathOperator*{\argmax}{arg\,max}
\newcommand{\E}[2]{\ensuremath{\operatorname{\mathbb{E}}_{#2}\big[ {#1} \big]}}
\newcommand{\R}{\ensuremath{\mathbb{R}}}
\DeclareMathSymbol{\shortminus}{\mathbin}{AMSa}{"39}
\newcommand{\sxtimes}{\mathsf{x}}
\newcommand{\T}{\ensuremath{^\intercal}}
\newcommand{\ie}{\textit{i}.\textit{e}., }
\newcommand{\eg}{\textit{e}.\textit{g}. }
\newenvironment{proof}[1]{\par\noindent\underline{Proof:}\space#1}{\hfill $\blacksquare$}
\newtheorem{proposition}{Proposition}[section]
\numberwithin{equation}{section}
\def\BState{\State\hskip-\ALG@thistlm}
\title{Feature Embedding by Template Matching as a ResNet Block}
\def\ada{Ada~G\"{o}rg\"{u}n}
\def\yeti{Yeti~Z.~G\"{u}rb\"{u}z}
\def\aydin{A.~Ayd{\i}n~Alatan}
\def\inst1{Dept. of Electrical and Electronics Eng., Middle East Technical University, Ankara, Turkey}
\addauthor{\ada}{ada.gorgun@metu.edu.tr}{1}
\addauthor{\yeti}{yeti@metu.edu.tr}{1}
\addauthor{\aydin}{alatan@metu.edu.tr}{1}
\begin{document}

\maketitle

\begin{abstract}
Convolution blocks serve as local feature extractors and are the key to success of the neural networks. To make local semantic feature embedding rather explicit, we reformulate convolution blocks as feature selection according to the best matching kernel. In this manner, we show that typical ResNet blocks indeed perform local feature embedding via template matching once batch normalization (BN) followed by a rectified linear unit (ReLU) is interpreted as arg-max optimizer. Following this perspective, we tailor a residual block that explicitly forces semantically meaningful local feature embedding through using label information. Specifically, we assign a feature vector to each local region according to the classes that the corresponding region matches. We evaluate our method on three popular benchmark datasets with several architectures for image classification and consistently show that our approach substantially improves the performance of the baseline architectures.
\end{abstract}




\section{Introduction}
\label{sec:introduction}
\begin{wrapfigure}[13]{r}[-5pt]{0.485\linewidth} 
\vspace{-3.2\intextsep} 
  \centerline{\includegraphics[width=\linewidth]{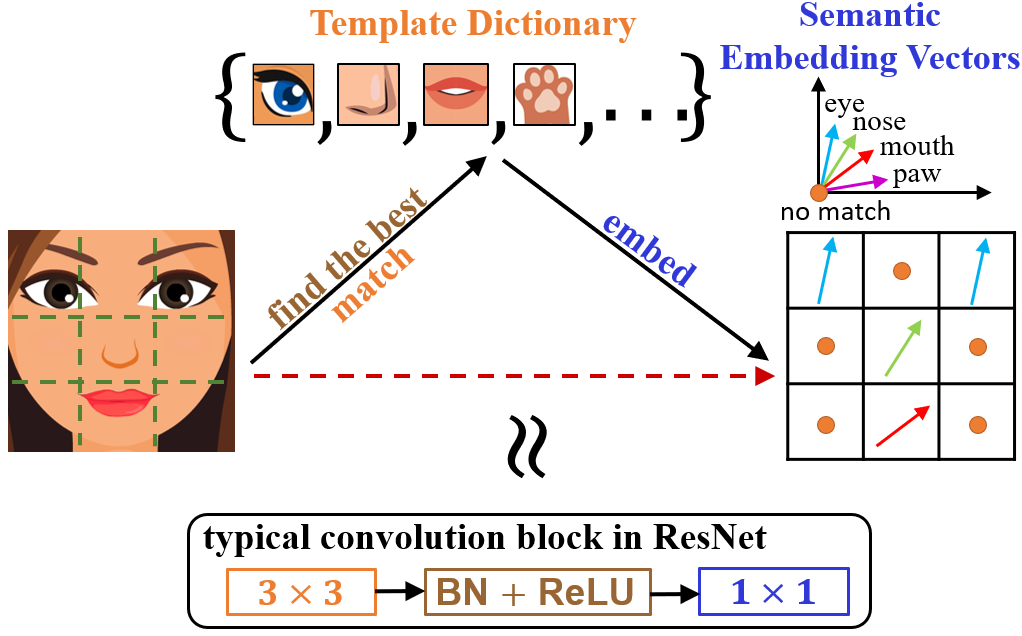}}
  \caption{Visualization of our formulation for local feature embedding and its relation to the typical convolution block existing in ResNet.}
	\label{fig:the_figure_1}
\end{wrapfigure}

Convolutional neural networks (CNN), especially ResNet-like \cite{he2016identity} architectures \cite{resnext,huang2017densely,szegedy2017inceptionv4,zagoruyko2016wide}, are state-of-the-art in image recognition until very recently \cite{chen2021vision}. The success of CNNs heavily relies on hierarchical feature extraction \cite{zhou2018interpreting} through stacked convolution blocks (\ie convolution followed by activation functions) whose parameters are learned in \textit{top-down} manner (\ie via feedback from class-supervised loss function). A possible explanation for the effectiveness of hierarchical feature extraction is considering each pixel in an intermediate feature map as a feature vector corresponding to a semantic entity whose existence with other such features forms some other semantic entities in the successors of the hierarchy (\eg \emph{wing} and \emph{beak} $\to$ \emph{bird}).

Although this folklore is empirically studied in \cite[and references therein]{zhou2018interpreting,zeiler2014visualizing} and further verified for attribute-based zero-shot classification in \cite{demirel2017attributes2classname,xu2020attribute}, its algorithmic implications for \textit{bottom-up} feature extraction are not clear. Thus, the advances typically focus on architectural designs \cite{zagoruyko2016wide,resnext,huang2017densely,szegedy2017inceptionv4} and leave bottom-up feature embedding formulation rather implicit, which might be a lost opportunity in improving the classification performance. Granted that top-down class-supervised feedback is able to shape the bottom-up behaviour through convolutional layers, can we make hierarchical feature extraction more explicit by exploiting supervision in lower levels?

If we were given localized annotations for lower level features in addition to the class labels, all we need would be a bottom-up feature extraction formulation to exploit such supervision. Nevertheless, we do not have such annotations in practice, which makes explicit supervision of intermediate layers a challenge. That being said, it is shown in text domain \cite{word2vec} that linear combination of the vectors corresponding to semantic entities yields the vector of another entity (\eg \emph{woman} $+$ \emph{royal} $\approx$ \emph{queen}). Then the question is \emph{``Can we use mixture of class labels to supervise lower level feature extraction?"}

In this study, we address the challenge of using class-level supervision to explicitly shape the behavior of the intermediate features, which differs from building classifiers at the output of intermediate layers to alleviate vanishing gradient as in GoogLeNet \cite{szegedy2015going} like architectures. We first consider bottom-up formulation of feature embedding through template matching and rigorously show its resemblance to how typical ResNet convolution blocks operate (\cref{fig:the_figure_1}). Building on such a relation, we propose a residual block that assigns a feature vector to each local region according to the classes that the corresponding region matches. We define \emph{best-matching} as a solution of an optimization problem and employ a \emph{soft-max} solution for not only enabling learning but also yielding novel semantic entities as the convex combination of the class features. Specifically, our block is trained with class-level supervision and each local region is encouraged to predict the class of the image it belongs. Surely, some regions are expected to match multiple classes since local features are shared among the classes (\eg \emph{wing} can exist in both \emph{plane} and \emph{bird}). Our method exploits such information to assign semantically meaningful embedding vectors to those regions by combining the vectors of the matched classes. Namely, we explicitly shape the bottom-up behavior of CNNs by learning to combine existing classes to make up new classes for the local regions. We validate our theoretical claims and show the effectiveness of our method with extensive evaluations on 3 popular classification benchmarks.

\section{Related Work}
We discuss the works that are most related to ours. Briefly, our contributions include that $i)$ we re-formulate ResNet block as a feature embedding by template matching, $ii)$ we introduce a batch-statistics-free replacement of BN+ReLU, $iii)$ we develop a residual block that effectively combines the embedding vector of the existing classes to yield embedding vectors to different semantic entities.

Related to interpretive feature embedding, bag of visual words based feature aggregation \cite{arandjelovic2016netvlad} and matching \cite{yeti_bovw} formulations are revisited for global representations. Those approaches build on feature embedding at the top level of CNN's feature extraction hierarchy. On the contrary, our approach explicitly exploits top-down information in earlier stages of the feature extractors for learning their parameters. That being said, our block employs auxiliary classification loss during training similar to \textit{deeply-supervised nets} \cite{lee2015deeply, szegedy2015going}. Those methods employ such loss only in training phase to regularize the features and to facilitate learning without vanishing gradients. Differently, we explicitly use predictions in both training and inference to semantically represent local regions with the combination of class specific vectors, which is a novel approach to use auxiliary loss in intermediate layers.

Our work is mostly related to approaches that are inspired from \textit{attention mechanism} \cite{vaswani2017attention} of natural language processing to express a token in terms of aggregated features within its context. Interpreting convolution as weighted aggregation of local features, predecessors \cite{wang2018non,ramachandran2019stand,hu2019local} replace convolution operation entirely with self-attention for bottom-up design of feature extraction. Albeit self-attention is later proven to express any convolutional layer \cite{cordonnier2019relationship}, patch-matching based vision transformer (\textit{ViT}) \cite{dosovitskiy2020image} shows no such convolution-mimicking attention layer is essential for powerfully expressive models. In our work, our template matching based formulation is also aligned with attention mechanism. Our work differs in that we arrive at similarity-weighted feature aggregation from formally defining the feature embedding through an optimization problem.

As a byproduct connection, activity normalization methods are related to our technique as well. As the pioneer, batch normalization (BN) \cite{normalization2015accelerating} addresses \emph{internal covariate shift} phenomenon. Our theoretical results show that BN has an alternative purpose in BN-ReLU context as \emph{pseudo arg-max} optimizer. Such a relation suggests \emph{margin augmented soft-max}\footnote{A constant is concatenated to the input vector of soft-max.} as an alternative replacement of BN-ReLU to the existing approaches \cite{ulyanov2016instance,ba2016layer,wu2018group} proposed for the relatively small mini-batches.

\section{Method}
We repurpose \emph{residual blocks} of a typical residual network \cite{he2016identity} as \emph{feature embedding by template matching} and accordingly, propose a novel residual block (depicted in \cref{fig:method}) that effectively learns local feature embedding from class labels.

We first re-formulate convolution block based local feature embedding as feature assignment through best matching kernel. Relating BN-ReLU to arg-max optimizer, we show that the convolution block of $3{\sxtimes}3$-BN-ReLU-$1{\sxtimes}1$ inherently performs local feature embedding via selecting the best matching convolution kernel (\cref{fig:the_figure_1}). Hence, inspiring from feature embedding by kernel matching interpretation, we develop our residual block.  

\subsection{Feature Embedding by Template Matching}
\label{sec:formulation}
We are given a feature map, $f\in\R^{w{\sxtimes}h{\sxtimes}d}$, which is the output of some NN layer. At each spatial location (\ie pixel), we have a feature $x\in\R^{d}$ that possibly represents a local region around it to some spatial extent.

We want to obtain a feature map, $f^{\prime}\in\R^{w^{\prime}{\sxtimes}h^{\prime}{\sxtimes}d^{\prime}}$, from $f$ by transforming $x$ into another vector that captures the semantics of local neighborhood. We let $x_{3{\sxtimes}3}\in\R^{9d}$ denote concatenated features of $3{\sxtimes}3$ window centered at $x$. We have a set of matching kernels $\{\omega_k\in\R^{9d}\}_k$ each of which seeks for a particular pattern. To each kernel $\omega_k$, we associate an embedding vector, $\nu_k\in\R^{d^\prime}$, representing the semantics of the corresponding $3{\sxtimes}3$ pattern. We aim to replace $x$ with the embedding vector of the best matching kernel to its neighborhood. Hence, we formally define the problem as:
\begin{equation}\tag{P1}\label{eq:argmax}
p^\ast =  \argmax_{\substack{p,q\geqslant0\\q+\Sigma_k p_k =1}} q\,\mu+\textstyle\sum_k p_k\, \omega_k\T x_{3{\sxtimes3}}
\end{equation}
where $\mu$ is a threshold to zero out the embedding vector when no kernel is  matched with at least $\mu$ similarity. $p^{\ast}$ is either one-hot or zero vector owing to \textit{total unimodularity} \cite{unimodular} of the constraints. We have $p^{\ast} = 0$ when any of the activations, $a_k=\omega_k\T x_{3{\sxtimes}3}$, are no greater than $\mu$. Then, we obtain the representation of $x$ as $x^{\prime} = \Sigma_k p^\ast_k\,\nu_k$.

Given the initial feature map, $f$, the transformed feature map, $f^{\prime}$, can be efficiently obtained by $3{\sxtimes}3$ convolution with kernels $\{\omega_k\}_k$, solving a linear program and $1{\sxtimes}1$ convolution with vectors $\{\nu_k\}_k$, sequentially. Although computationally efficient, one critical problem with such formulation is that the linear program breaks the back-propagation of the computational graph. Namely, $p^\ast$ as a function of $a$ is not smooth where $a_k=\omega_k\T x_{3{\sxtimes3}}$.

To alleviate non-differentiability of the linear program, we can use stochastically perturbed optimizers \cite{berthet2020learning}:
\begin{equation}\tag{P2}\label{eq:perturbed_argmax}
p^\ast \!=  \E{\!\argmax_{\substack{p,q\geqslant0\\q+\Sigma_k p_k =1}} \!q\,(\mu{+}\tfrac{1}{\epsilon}z^\prime)+p\T(a{+}\tfrac{1}{\epsilon}z)}{z^\prime\!,z{\sim}\mathcal{N}(0,I)}
\end{equation}
or we can use entropy regularization to make the problem strictly concave and smooth:
\begin{equation}\tag{P3}\label{eq:entropy_argmax}
\begin{split}
p^\ast = \argmax_{\substack{p,q\geqslant0\\q+\Sigma_k p_k =1}} q\,\mu+ p\T a -\tfrac{1}{\epsilon}(q\log q+p\T\log p)
\end{split}
\end{equation}
where $\epsilon$ in both problems controls how smooth the solution $p^\ast$ is to be. We will introduce two propositions that ensure the existence of the \textit{Jacobian} $[\tfrac{\partial p^\ast}{\partial a}]_{ij}\coloneqq\tfrac{\partial p^\ast_j}{\partial a_i}$.

\begin{proposition}[follows\! from\! Lemma\! 1.5\! \cite{abernethy2016perturbation}] \label{proposition_perturbed}
Given samples $z^\prime,z$ from standard normal distribution, let $\Tilde{p}(z^\prime, z)\coloneqq\argmax\limits_{\substack{p,q\geqslant0\\q+\Sigma_k p_k =1}} \!q\,(\mu{+}\tfrac{1}{\epsilon}z^\prime)+p\T(a{+}\tfrac{1}{\epsilon}z)$. If $p^\ast$ is the solution of the problem \eqref{eq:perturbed_argmax}, then we have:
$$\tfrac{\partial p^\ast}{\partial a}=\E{\epsilon\Tilde{p}(z^\prime, z)\,z\T}{z^\prime,z\sim\mathcal{N}(0,I)}$$
\end{proposition}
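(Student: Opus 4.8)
The plan is to prove the Jacobian identity by Gaussian integration by parts, exploiting that the argument $a$ enters the random optimizer $\Tilde{p}(z^\prime,z)$ only through the shifted linear term $p\T(a+\tfrac{1}{\epsilon}z)$. First I would make the expectation explicit as
\[
p^\ast(a)=\int\!\!\int \Tilde{p}(z^\prime,z)\,\phi(z^\prime)\,\phi(z)\,dz^\prime\,dz,
\]
with $\phi$ the standard normal densities of the appropriate dimensions ($z\in\R^{n}$, $n$ the number of kernels). The crux is that, for fixed $z^\prime,z$, the map $a\mapsto\Tilde{p}(z^\prime,z)$ is the solution of a linear program and is therefore piecewise constant (always a vertex of the simplex), so its pointwise derivative in $a$ vanishes almost everywhere. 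Hence one cannot differentiate $\Tilde{p}$ directly under the integral; the differentiability of $p^\ast$ must be supplied by the Gaussian smoothing, not by the optimizer itself.

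To make this precise I would push the $a$-dependence out of the discontinuous optimizer and into the smooth density by the substitution $u=a+\tfrac{1}{\epsilon}z$, \ie $z=\epsilon(u-a)$ and $dz=\epsilon^{n}du$. After this change of variables the objective reads $q(\mu+\tfrac{1}{\epsilon}z^\prime)+p\T u$ and no longer depends on $a$, while all $a$-dependence is carried by the factor $\phi(\epsilon(u-a))\propto\exp(-\tfrac{\epsilon^2}{2}\norm{u-a}^2)$. Differentiation under the integral sign is now legitimate, which I would justify by dominated convergence: $\Tilde{p}$ is bounded (it lies in the probability simplex) and the $a$-derivative of the Gaussian factor has integrable polynomial-times-Gaussian tails. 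Using $\partial_{a_i}\exp(-\tfrac{\epsilon^2}{2}\norm{u-a}^2)=\epsilon^2(u_i-a_i)\exp(-\tfrac{\epsilon^2}{2}\norm{u-a}^2)$ and then changing variables back ($\epsilon^2(u_i-a_i)=\epsilon z_i$) collapses the expression to the entrywise identity $\tfrac{\partial p^\ast_j}{\partial a_i}=\E{\epsilon\,\Tilde{p}_j(z^\prime,z)\,z_i}{z^\prime,z\sim\mathcal{N}(0,I)}$, which in matrix form is precisely the outer-product Jacobian $\tfrac{\partial p^\ast}{\partial a}=\E{\epsilon\,\Tilde{p}(z^\prime,z)\,z\T}{z^\prime,z\sim\mathcal{N}(0,I)}$ asserted in the statement; note that $z^\prime$ is untouched by the substitution and merely rides along as a passive factor in the expectation.

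Conceptually the same computation is Stein's lemma: since $a$ and $z$ enter only through $a+\tfrac{1}{\epsilon}z$, the chain rule turns $\nabla_a$ into $\epsilon\nabla_z$, and integrating by parts against the Gaussian score $\nabla_z\log\phi(z)=-z$ reproduces the $\epsilon\,z\T$ weighting; this is the content borrowed from Lemma~1.5 of \cite{abernethy2016perturbation}. I expect the only genuine obstacle to be the rigorous handling of the non-differentiability of $\Tilde{p}$, which is exactly why I favour the change-of-variables route over a naive envelope-theorem argument: the integration by parts is valid in the distributional sense because the jump set of the piecewise-constant optimizer has Lebesgue measure zero and the Gaussian density decays at infinity, so no boundary terms survive and the smoothing integral is well defined.
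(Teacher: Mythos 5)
Your proof is correct and is essentially the paper's own argument: the paper disposes of this proposition purely by citing Lemma~1.5 of \cite{abernethy2016perturbation}, and that lemma is precisely the Gaussian integration-by-parts (Stein) identity you rederive via the substitution $u=a+\tfrac{1}{\epsilon}z$, including the dominated-convergence justification needed because the LP optimizer $\Tilde{p}$ is piecewise constant in $a$. One cosmetic remark: under the paper's convention $[\tfrac{\partial p^\ast}{\partial a}]_{ij}\coloneqq\tfrac{\partial p^\ast_j}{\partial a_i}$, your entrywise identity assembles into $\mathbb{E}\big[\epsilon\, z\,\Tilde{p}(z^\prime,z)\T\big]$, the transpose of the displayed matrix, but the two coincide because this matrix is the Hessian of the Gaussian-smoothed maximum and is therefore symmetric.
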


\begin{proposition} \label{proposition_entropy}
The solution of the problem \eqref{eq:entropy_argmax} admits closed form expression as $p^\ast_k = \tfrac{\exp(\epsilon a_k)}{\exp(\epsilon \mu)+\Sigma_{k^\prime}\exp(\epsilon a_{k^\prime})}$ (\ie \textit{soft-max}) and we have $\tfrac{\partial p^\ast}{\partial a}=\epsilon(\Lambda(p^\ast) - p^\ast p^{\ast\intercal})$
where $\Lambda(p^\ast)$ is the diagonal matrix with $p^\ast$ as the diagonal.
\end{proposition}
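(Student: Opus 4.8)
The plan is to treat \eqref{eq:entropy_argmax} as a strictly concave maximization over the probability simplex, solve it in closed form via Lagrange multipliers to recover the soft-max, and then obtain the Jacobian by direct differentiation. First I would observe that the objective is strictly concave: the part $q\mu + p^\intercal a$ is linear hence concave, and since $-x\log x$ is strictly concave and $\epsilon>0$, the regularizer $-\tfrac{1}{\epsilon}(q\log q + p^\intercal\log p)$ is strictly concave. As the feasible set $\{(p,q):p,q\geqslant0,\;q+\sum_k p_k=1\}$ is convex and compact, a unique maximizer $p^\ast$ exists, so it suffices to exhibit one stationary point.

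For the closed form I would form the Lagrangian for the equality constraint,
$$L = q\mu + p^\intercal a - \tfrac{1}{\epsilon}\big(q\log q + p^\intercal\log p\big) - \lambda\Big(q+\textstyle\sum_k p_k - 1\Big),$$
and argue that the nonnegativity constraints are inactive because the derivative of $-x\log x$ diverges as $x\to0^+$, forcing the optimum into the relative interior; hence stationarity of $L$ alone characterizes the solution. Setting $\partial L/\partial p_k = a_k - \tfrac{1}{\epsilon}(\log p_k+1) - \lambda = 0$ and $\partial L/\partial q = \mu - \tfrac{1}{\epsilon}(\log q+1)-\lambda=0$ yields $p_k = C\exp(\epsilon a_k)$ and $q = C\exp(\epsilon\mu)$ sharing the common factor $C = \exp(-\epsilon\lambda-1)$. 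Substituting into the normalization $q+\sum_k p_k=1$ pins down $C = \big(\exp(\epsilon\mu)+\sum_{k^\prime}\exp(\epsilon a_{k^\prime})\big)^{-1}$, which gives the claimed soft-max expression for $p^\ast_k$.

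For the Jacobian I would differentiate $p^\ast_j = \exp(\epsilon a_j)/Z$ with $Z \coloneqq \exp(\epsilon\mu)+\sum_{k^\prime}\exp(\epsilon a_{k^\prime})$ by the quotient rule, using $\partial Z/\partial a_i = \epsilon\exp(\epsilon a_i)$. The diagonal terms give $\partial p^\ast_i/\partial a_i = \epsilon\big(p^\ast_i - (p^\ast_i)^2\big)$ and the off-diagonal terms give $\partial p^\ast_j/\partial a_i = -\epsilon\,p^\ast_i p^\ast_j$, i.e. $\partial p^\ast_j/\partial a_i = \epsilon(p^\ast_i\delta_{ij} - p^\ast_i p^\ast_j)$; collecting these under the stated convention $[\partial p^\ast/\partial a]_{ij}=\partial p^\ast_j/\partial a_i$ produces the matrix $\epsilon\big(\Lambda(p^\ast)-p^\ast p^{\ast\intercal}\big)$.

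I do not expect a serious obstacle, since this is a standard maximum-entropy computation; the only point requiring care is justifying that the optimum lies in the interior so the inequality multipliers vanish and the unconstrained stationarity conditions, together with strict concavity, fully determine $p^\ast$. The one deviation from the textbook simplex case is the auxiliary variable $q$, which effectively injects a fixed logit $\epsilon\mu$ into the soft-max denominator; I would make sure it is carried consistently through both the Lagrangian stationarity and the normalization step.
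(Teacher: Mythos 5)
Your proposal is correct and follows exactly the route the paper intends: the paper's one-line proof ("first order optimality conditions owing to strict concavity") is precisely your Lagrangian stationarity argument — with the interiority of the optimum guaranteed by the diverging derivative of the entropy term — followed by direct differentiation of the resulting soft-max for the Jacobian. Your write-up simply fills in the details the paper leaves implicit, including the correct handling of the auxiliary variable $q$ as a fixed logit $\epsilon\mu$ in the normalizer.
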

\begin{proof}
The results follow from the first order optimality conditions owing to strict concavity.
\end{proof}

\bigbreak
The two propositions enable us to implement the best matching kernel selection as a differentiable layer using soft maximizers. $p^\ast$ will no longer be a one-hot or zero vector. Granted, the entities of $p^\ast$ will decay to zero if no activation is greater than $\mu$ and we will possibly have multiple non-zero entities otherwise due to soft-max operation. To this end, BN-ReLU can be interpreted as a soft approximation of the problem \eqref{eq:argmax} as we will show shortly.

\subsection{BN-ReLU as a Soft Maximizer of \eqref{eq:argmax}}
\label{sec:bnargmax}

BN \cite{normalization2015accelerating} and its successor counterparts \cite{ulyanov2016instance,ba2016layer,wu2018group} perform activity normalization of the form $\hat{a}_k=\gamma_k\tfrac{a_k-\mathbb{E}[a_k]}{\sqrt{\mathrm{Var}(a_k)}}+\beta_k$ using some batch statistics. Applying ReLU to $\hat{a}$, we obtain $\hat{p}=\max(\hat{a},0)$. Given $\{\nu_k\}_k$ embedding vectors, we compute the output feature as $x^{\prime} = \Sigma_k \hat{p}_k\,\nu_k$. Denoting $\eta\coloneqq\Sigma_k\hat{p}_k$ and $\hat{p}^\ast_k =\nicefrac{\hat{p}_k}{\eta}$, we can equivalently write $x^{\prime} = \eta\Sigma_k \hat{p}^\ast_k\,\nu_k$, where $\hat{p}^\ast$ is a feasible solution for problem $\eqref{eq:argmax}$ and indeed is the optimal solution when all the activations are less than $\mu_k$ for $\mu_k=\mathbb{E}[a_k]-\tfrac{\beta_k\sqrt{\mathrm{Var}(a_k)}}{\gamma_k}$. Moreover, $\hat{p}^\ast$ preserves the relative ordering of the values in the solution of the problem \eqref{eq:entropy_argmax}. In fact, BN maps activations around 0 where we have $\mathrm{e}^x \approx 1 + x$, meaning that BN-ReLU is a biased first order approximation for unnormalized soft-max for the non-negative activations. Hence, BN-ReLU can be interpreted as yielding a scaled soft maximizer to the problem \eqref{eq:argmax}.

We support our claims on such a relation with empirical studies (\cref{sec:ablation}) where we replace BN-ReLU with perturbed maximizer \cite{berthet2020learning} and soft-max layers and scale the output with a constant. Such replacement of BN-ReLU mitigates \textit{batch-statistics} demand in activity normalization.

Showing the approximate equivalence between BN-ReLU and arg-max, we can use convolution block of $3{\sxtimes}3$-BN-ReLU-$1{\sxtimes}1$  to implement our local feature embedding by template matching. In fact, $3{\sxtimes}3$-BN-ReLU-$1{\sxtimes}1$ is a typical block exploited in ResNet based architectures \cite{he2016identity,resnext, huang2017densely}. Thus, our formulation of local feature embedding provides a different insight towards explanation of how ResNets succeed. Besides, our formulation suggests that $3{\sxtimes}3$-BN-ReLU-$1{\sxtimes}1$ convolution block is mimicking \textit{cross-attention} \cite{vaswani2017attention} between $3{\sxtimes}3$ patches and convolution kernels. Namely, $3{\sxtimes}3$ patches are \emph{queries} and the convolution kernels are the \emph{keys}. Each patch is represented by a vector which is the convex combination of \emph{value} vectors corresponding to \emph{keys}.

\begin{figure}[!hb]
  \centering
  \centerline{\includegraphics[width=1.\linewidth]{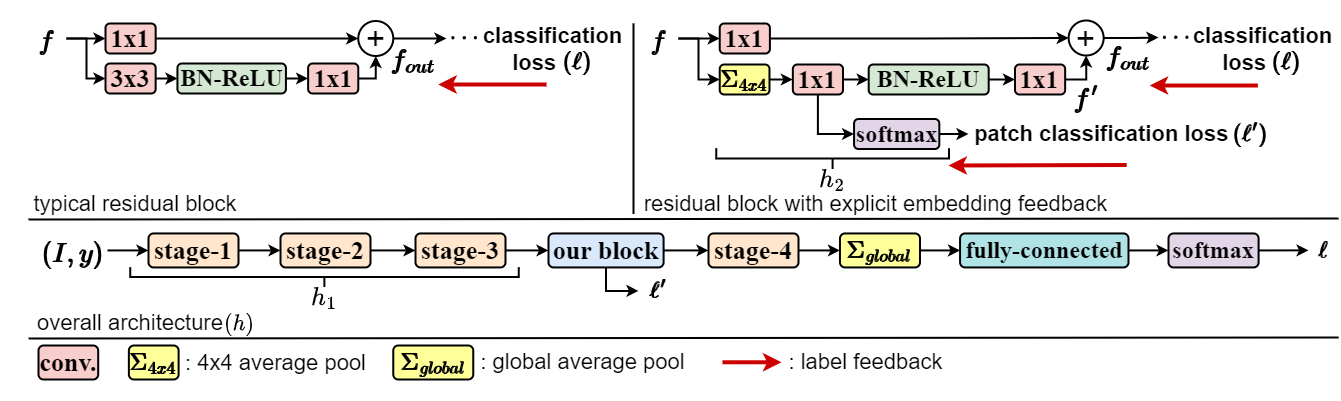}}
  \caption{Computation flow of a residual block (top-left), our feature embedding block (top-right), and the overall architecture equipped with our method (bottom).}
	\label{fig:method}
%
\end{figure}

\subsection{Explicit Feature Embedding as Residual Block}
\label{sec:proposed_stage}
We just show that bottom-up behavior of CNNs having $3{\sxtimes}3$-BN-ReLU-$1{\sxtimes}1$ blocks within is feature vector assignment by template matching. In particular, the embedding vector of a $3{\sxtimes}3$ patch is the scaled convex combination of the \emph{value} vectors corresponding to the convolution kernels, where combination weights are proportional to the matching scores. Thus, each residual block of ResNet (\cref{fig:method}) can be interpreted as enhancing the feature vectors in the input feature map through shortcut connection by the semantic vectors of the best matching patterns to the corresponding features' $3{\sxtimes}3$ neighborhood. Following this perspective, we now formulate our feature embedding mechanism.

Instead of $3{\sxtimes}3$ windows, we consider a larger spatial extent (\ie \emph{patches}) centered around each pixel in a feature map. Our aim is to match such patches to classes rather than convolution kernels. We achieve this by training an auxiliary classifier for the patches along with the main classifier. Inevitably, the patches having shared entities among classes will not be discriminative enough and will match to multiple classes to minimize the classification loss. We rigorously make use of such behaviour to embed patches regarding their semantic meaning by using learnable embedding vectors, \ie \emph{value} vectors, for the classes. Specifically, we use the prediction scores to compute convex combination of the \emph{value} vectors. Provided that the learned \emph{value} vectors correspond to semantics of the classes, then their combination will correspond to new semantic entities (\eg 0.5\emph{plane} + 0.5\emph{bird} $\approx$ \emph{wing}). In this way, we manage to exploit weighted combination of the labels to explicitly supervise local feature extraction.

Formally, given an input feature map, $f\in\R^{w\sxtimes h\sxtimes d}$, we extract $\tfrac{w}{2}\sxtimes \tfrac{h}{2}$ patches, $x_{\Box}\in\R^{\tfrac{w}{2}\sxtimes \tfrac{h}{2}\sxtimes d}$ where $x$ with box $\Box$ is a patch centered at $x$. We then obtain a global representation by average pooling for each patch as $x_g = \tfrac{1}{\vert x_{\Box}\vert}\Sigma_{x\in x_{\Box}} x$ where $\vert x_{\Box}\vert$ denotes the number of features. We apply a $1\sxtimes 1$ convolution (\ie linear transform) with bias to obtain class matching scores (\ie activations, $a$, in the context of our original formulation in \cref{sec:formulation}) for $c$-many classes as $a_k = \alpha_k\T x_g + \beta_k$ for $k\in[1\ldots c]$ where $\alpha_k$ and $\beta_k$ are the trainable vector and the bias term for class $k$. 

To learn the classifier parameters, $(\alpha,\beta)$, we augment the training loss with an auxiliary per patch classification loss. Hence, we are able to propagate label supervision in different levels to explicitly encourage feature embedding by template matching paradigm. The loss for a dataset, $\mathcal{D}$, of image($I$)-label($y)$ tuples becomes:
\begin{equation}\label{eq:aux_loss}
    \mathcal{L}(\mathcal{D}) = \tfrac{1}{\vert\mathcal{D}\vert}\!\!\!\!\!\!\!\textstyle\sum\limits_{(I,y){\in}\mathcal{D}} \!\!\!\!\!\!\big[(1{\shortminus}\lambda)\ell(h(I),y)+\tfrac{1}{wh}\!\!\!\!\!\!\!\textstyle\sum\limits_{x{\in}h_1(I)}\!\!\!\!\!\!\lambda\ell(h_2(x_\Box),y)\big]
\end{equation}
where $h_1(\cdot)$ denotes the network output of size $w\sxtimes h$ until our layer, $h_2(\cdot)$ denotes our layer's class scores, $h(\cdot)$ denotes the whole network's class scores and $\ell(\cdot)$ is the \textit{cross-entropy} loss of soft-maxed scores.

Finally, following our results from \cref{sec:formulation,sec:bnargmax}, we apply BN-ReLU-$1{\sxtimes}1$ convolution block to obtain the final representation, $x^\prime\in\R^d$, for the patch $x_\Box$. Namely, to each class, we associate an embedding vector, $\nu_k\in\R^d$, to describe the whole patch as $x^\prime=\Sigma_k \hat{p}_k\,\nu_k$ where $\hat{p}$ is the output of BN-ReLU as we explain in \cref{sec:bnargmax}. We should note that we use soft-max in loss computation to have normalized probabilities and we rigorously use BN-ReLU for the mixing coefficients to tackle no-match cases while soft-maxing. Hence, our method matches local regions to the class labels rather than particular patterns and embeds the corresponding semantic information as the scaled convex combination of the class semantics so that the embedded semantic is to be useful in the further levels of the feature embedding hierarchy. Similar to typical residual block, we add the resultant feature map, $f^\prime$, to the initial map, $f$, via shortcut connection with a per-pixel linear transform, \ie $f^{out}=\mathrm{conv}_{1\sxtimes 1}(f) + f^\prime$.

\subsection{Implementation Details}
\label{sec:implementation}

We use ResNet (RN) \cite{he2016identity}, Wide-ResNet (WRN) \cite{zagoruyko2016wide} of \emph{depth} 16 and \emph{widening factor} 2, and DenseNet (DN) \cite{huang2017densely} of \emph{depth} 100 and \emph{growth rate} 12 as the baseline architectures each of which has 4 stages. In RN and WRN, we have spatial reduction in stage-2 and stage-3 whereas in DN, we have spatial reduction in the first two stages. We summarize the general architecture in \cref{fig:method} where we also show our feature embedding mechanism as well as $h_1(\cdot)$ and $h_2(\cdot)$ in \cref{eq:aux_loss}. We place our layer in between the last two stages. We only add an extra classification and two linear transforms (\ie three $1{\sxtimes}1$ convolutions)  to the baselines. For DN, we additionally employ concatenation of $f^\prime$ and $f$ instead of addition through shortcut to align with the architectural design of DN. We provide further details for reproducibility in the supplementary material.

\section{Experimental Work}
\label{sec:setup}
We evaluate the effectiveness of the proposed feature embedding method for the image recognition task. We further perform ablation studies for the implications of our formulations as well as the effects of the hyperparameters.

\textbf{Datasets.} 100-class Mini-ImageNet \cite{Ravi2017OptimizationAA} with images of size $84{\sxtimes}84$ and Cifar \cite{krizhevsky2009learning} (10 and 100) with images of size $32{\sxtimes}32$. We use splits of 65\%, 15\%, 20\% for train, validation, test sets with train data augmentation of \cite{he2016identity}.

\textbf{Training.} Default \textit{Adam} optimizer with $10^{{\shortminus}3}$ learning rate, $10^{{\shortminus}4}$ weight decay, and mini-batch size of 32.

\textbf{Hyperparameters.} We set $\lambda{=}0.5$ in \cref{eq:aux_loss} based on our ablation study (\cref{fig:lambda_ablation}). Due to larger images of Mini-ImageNet, we  employ additional spatial reduction in the first stages of RN and WRN, and in the third stage of DN to have similar output feature size with Cifar.

\subsection{Ablation Studies}
\label{sec:ablation}

\begin{wrapfigure}[10]{r}[-5pt]{0.35\linewidth}
\vspace{-1.1\intextsep}
  \centerline{\includegraphics[width=1.0\linewidth,keepaspectratio]{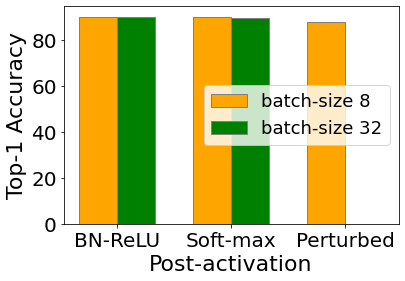}}
\caption{Replacing BN-ReLU}
\label{fig:bnrelu}
\end{wrapfigure}

\textbf{Replacing BN-ReLU with soft-maximizers.} To support our claims in \cref{sec:bnargmax}, we replace BN-ReLU following $3{\sxtimes}3$ convolution with perturbed maximizer \cite{berthet2020learning} and soft-max layers with $\mu$ and $\eta$ constants from \cref{sec:bnargmax}. In particular, we concatenate $\mu$ to activations and perform soft-max, which we refer \textit{margin augmented soft-max}. We then scale the output by $\eta$. Using $\mu_k=\mathbb{E}[a_k]-\tfrac{\beta_k\sqrt{\mathrm{Var}(a_k)}}{\gamma_k}$, we estimate $\mu{=}2.5$ from BN layers of a pre-trained ResNet20 as the average of non-zero $\mu_k$ for each activation. Similarly, we use $\eta{=}17$ from the average of per-pixel sum of the activations after BN-ReLU. For perturbed maximizer \cite{berthet2020learning}, we use 600 samples for empirical expectation. We use $\epsilon{=}1$ for both methods based on the ablation study in \cite{berthet2020learning}. We evaluated the methods with relatively small (8) and larger (32) batch sizes except we exclude perturbed maximizer in 32 batch size due to its memory demand. We use 3-stage 2-block ResNet20 \cite{he2016identity} baseline and Cifar-10 dataset in our evaluation. The comparisons are provided in \cref{fig:bnrelu}. We observe that the methods perform on par with each other. Supporting our claims in \cref{sec:bnargmax}, such empirical results also suggest a technique for activity normalization without using batch-statistics.

\begin{wrapfigure}[12]{l}[-5pt]{0.37\linewidth}
\vspace{0\intextsep}
  \centerline{\includegraphics[width=1.0\linewidth,keepaspectratio]{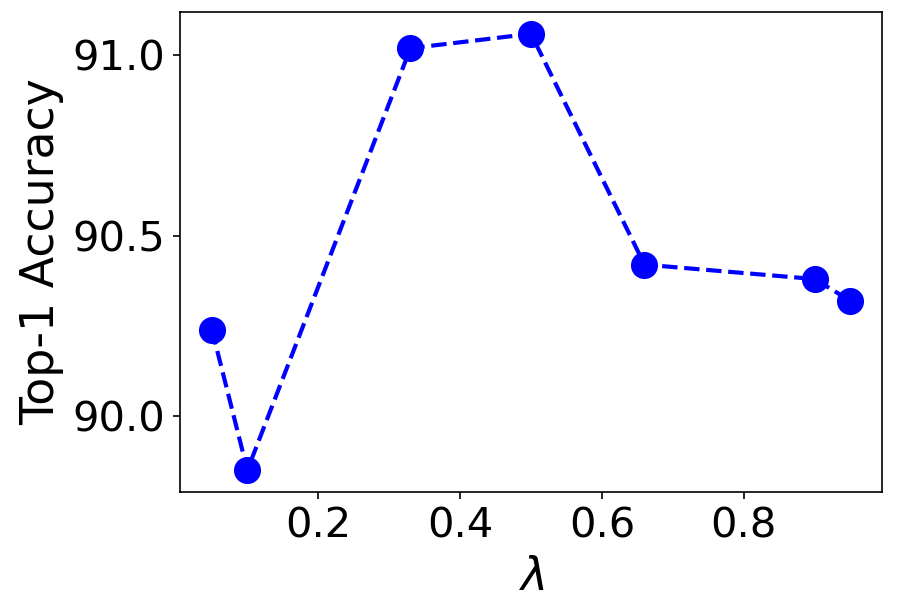}}
\caption{Effect of $\lambda$.}
\label{fig:lambda_ablation}
\end{wrapfigure} \textbf{Effect of $\mathbf{\lambda}$}. We perform grid search on $\lambda$ mixing coefficient for the two losses in \cref{eq:aux_loss} (\ie $\ell$ and $\ell^\prime$ in \cref{fig:method}). We use 4-stage 2-block ResNet with our method and Cifar-10 dataset in our evaluation. We provide the results in \cref{fig:lambda_ablation}. Small $\lambda$ values (\ie absence of auxiliary loss) degrades the performance. We find that equally weighting the losses ($\lambda{=}0.5$) brings the best performance. 

\textbf{Number of blocks (\emph{depth}).} We evaluate both 2-block (RN26) and 3-block (RN38) stages in RN baseline to examine the effect of our feature embedding with the increased depth. The comparisons are provided in \Cref{tab:alldata} where we observe that increased depth boosts the performance of our method. Notably, we also observe that our method with less depth performs on par with the baseline of more depth.

\subsection{Classification Results}

\begin{wraptable}[13]{r}[-5pt]{0.6\linewidth}
    \vspace{-.9\intextsep}
	\centering
	\caption{Evaluation on image recognition task. Bold: best in its category. $C:$ the number of classes.}
	\label{tab:alldata}
	\resizebox{\linewidth}{!}{%
		\begin{tabular}{lcccc}
			\toprule 
			Dataset $\rightarrow$ &\multicolumn{1}{c}{}& \multicolumn{1}{c|}{Cifar10} & \multicolumn{1}{c|}{Cifar100} & \multicolumn{1}{c}{Mini-ImageNet} \\ 
			\cmidrule{3-3} \cmidrule{4-4} \cmidrule{5-5}
			Architecture $\downarrow$ & Params & top-1 acc. & top-1 acc. & top-1 acc.\\ \midrule
			RN26 & {\color[HTML]{000000} 0.96M$+257C$} & {\color[HTML]{000000} 89.52} & {\color[HTML]{000000} 65.94} & {\color[HTML]{000000} 60.43}  \\
			RN26-aux. & {\color[HTML]{000000} 0.96M$+386C$} & {\color[HTML]{000000} 90.57} & {\color[HTML]{000000} 66.21} & {\color[HTML]{000000} 60.70} \\
			RN26-Ours & {\color[HTML]{000000} 0.98M$+516C$} & {\color[HTML]{000000} \textbf{91.06}} & {\color[HTML]{000000} \textbf{66.78}} & {\color[HTML]{000000} \textbf{61.23}}\\ \midrule
			RN38 & {\color[HTML]{000000} 1.42M$+257C$} & {\color[HTML]{000000} 90.78} & {\color[HTML]{000000} 68.15} & {\color[HTML]{000000} 60.72}\\ 
			RN38-Ours & {\color[HTML]{000000} 1.44M$+516C$} & {\color[HTML]{000000} \textbf{91.36}} & {\color[HTML]{000000} \textbf{69.01}} & {\color[HTML]{000000} \textbf{63.83}}\\ \midrule
			WRN16 & {\color[HTML]{000000} 1.28M$+129C$} & {\color[HTML]{000000} 90.52} & {\color[HTML]{000000} 67.11} & {\color[HTML]{000000} 60.73}\\ 
			WRN16-Ours & {\color[HTML]{000000} 1.30M$+388C$} & {\color[HTML]{000000} \textbf{91.10}} & {\color[HTML]{000000} \textbf{67.36}} & {\color[HTML]{000000} \textbf{62.92}}\\ \midrule
			DN100 & {\color[HTML]{000000} 1.20M$+535C$} & {\color[HTML]{000000} 92.62} & {\color[HTML]{000000} 71.65} & {\color[HTML]{000000} 65.03}\\ 
			DN100-Ours & {\color[HTML]{000000} 1.32M$+1222C$} & {\color[HTML]{000000} \textbf{92.92}} & {\color[HTML]{000000} 71.25} & {\color[HTML]{000000} 68.86}\\
			DN100-Ours-C & {\color[HTML]{000000} 1.36M$+1264C$} & {\color[HTML]{000000} 92.71} & {\color[HTML]{000000} \textbf{72.14}} & {\color[HTML]{000000} \textbf{68.93}}\\\bottomrule
		\end{tabular}%
	}  
\end{wraptable}

We train several architectures (RN\#, WRN16, DN100) equipped with our feature embedding block (\emph{Baseline}-Ours). The baselines are of different architectural choices with varying depths. Our aim is rather to show the effectiveness of our theoretical derivations than to push state-of-the-art (SOTA) by architecture design. We firmly believe that our experiments are sufficient to validate the effectiveness and the generalization capability of our method as well as our claims. 

In order to minimize the confounding of the factors other than our proposed method, we keep the comparisons as fair as possible following the same experimental settings disclosed in \cref{sec:setup} for all architectures. We provide the results in \Cref{tab:alldata} where we mark all results that outperform its baseline counterpart. We observe that we improve the performance of WRN and DN, which are SOTA CNN architectures. Moreover, our method consistently improves all the baselines and predominantly, such improvement does not come from the marginal parameter increase that our method brings. 2-block RN26 with our method is mostly superior to its 3-block baseline (RN38). In relatively shallow architectures, our method's improvement is more significant. With DN architecture, we also experiment enhancing the features by concatenation (DN-Ours-C) instead of addition (\cref{sec:implementation}). Concatenation is marginally superior to addition in DN owing to better alignment with the architecture of DN.

We also evaluate RN26 with auxiliary classification loss only as in \cite{lee2015deeply, szegedy2015going} to show the efficiency of our contribution which is exploiting matching scores as the mixing coefficients for the class embedding vectors. Our method brings consistent improvements in all datasets with respect to direct application of auxiliary classification loss in the intermediate layers.

\subsection{Analysis of Feature Embedding Behaviour}
We further analyze the effect of our feature embedding mechanism with RN26 in Cifar10 dataset through t-SNE plots of the features (\cref{fig:geometry_comparison,fig:embedding_geometry}) as well as sample patches (\cref{fig:words}) corresponding to spatial extent of the features. We sample 80 images for each class and project the pixels at the feature maps to 2D space.

\begin{figure}[!ht]
  \centering
  \centerline{\includegraphics[width=1.0\linewidth]{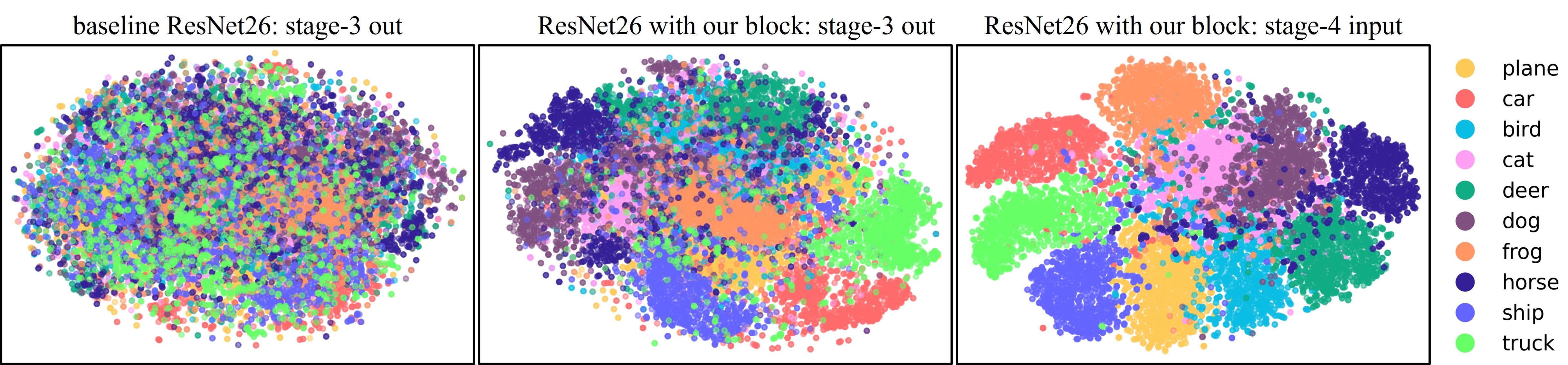}}
  \caption{2D t-SNE projections of the features with and without our method.}
	\label{fig:geometry_comparison}
  \end{figure}

%

\begin{wrapfigure}[14]{r}[-5pt]{0.5\linewidth}
\centerline{
\begin{minipage}[b]{.42\linewidth}
  \centering
  \centerline{\includegraphics[width=\linewidth]{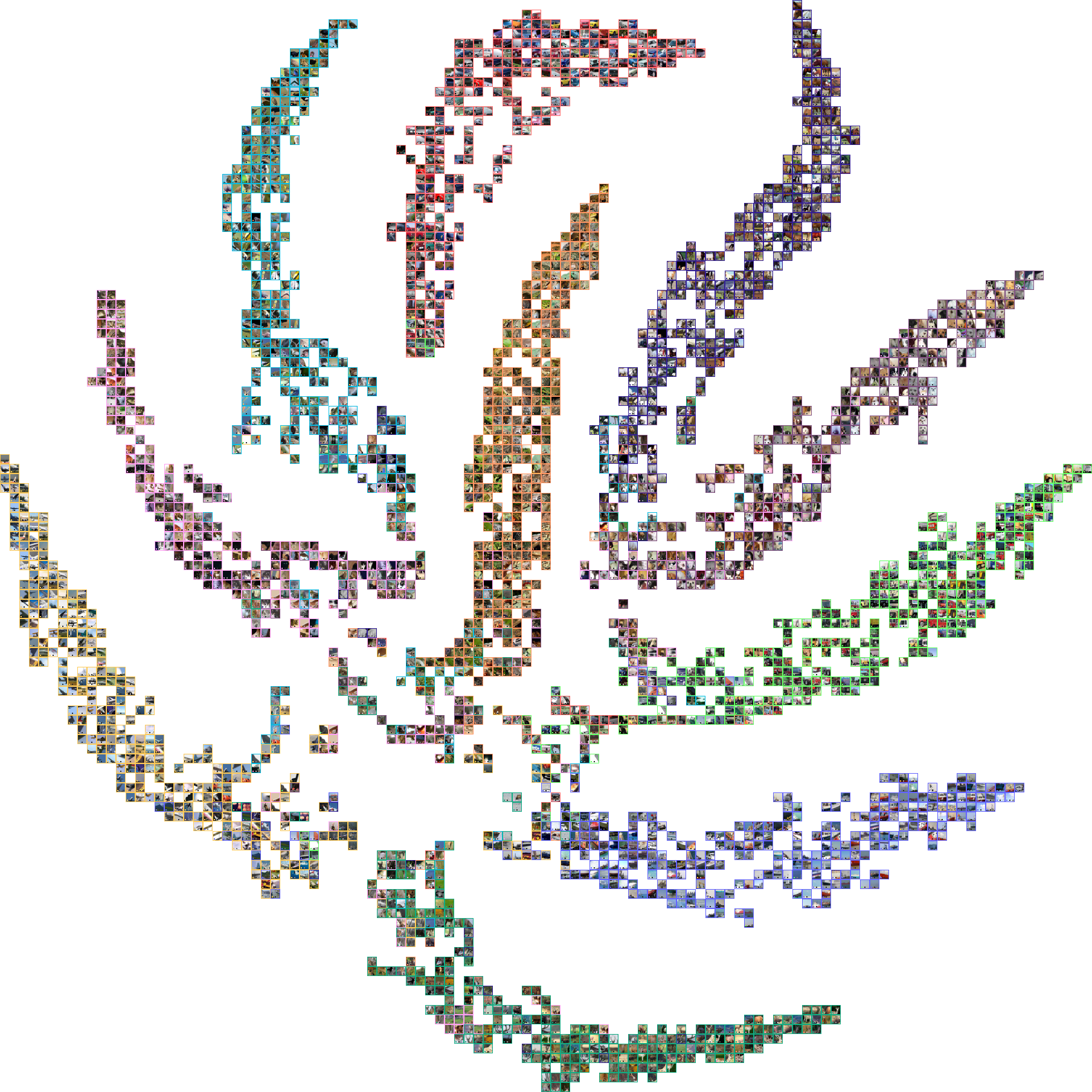}}
\end{minipage}
\begin{minipage}[b]{.42\linewidth}
  \centering
  \centerline{\includegraphics[width=\linewidth]{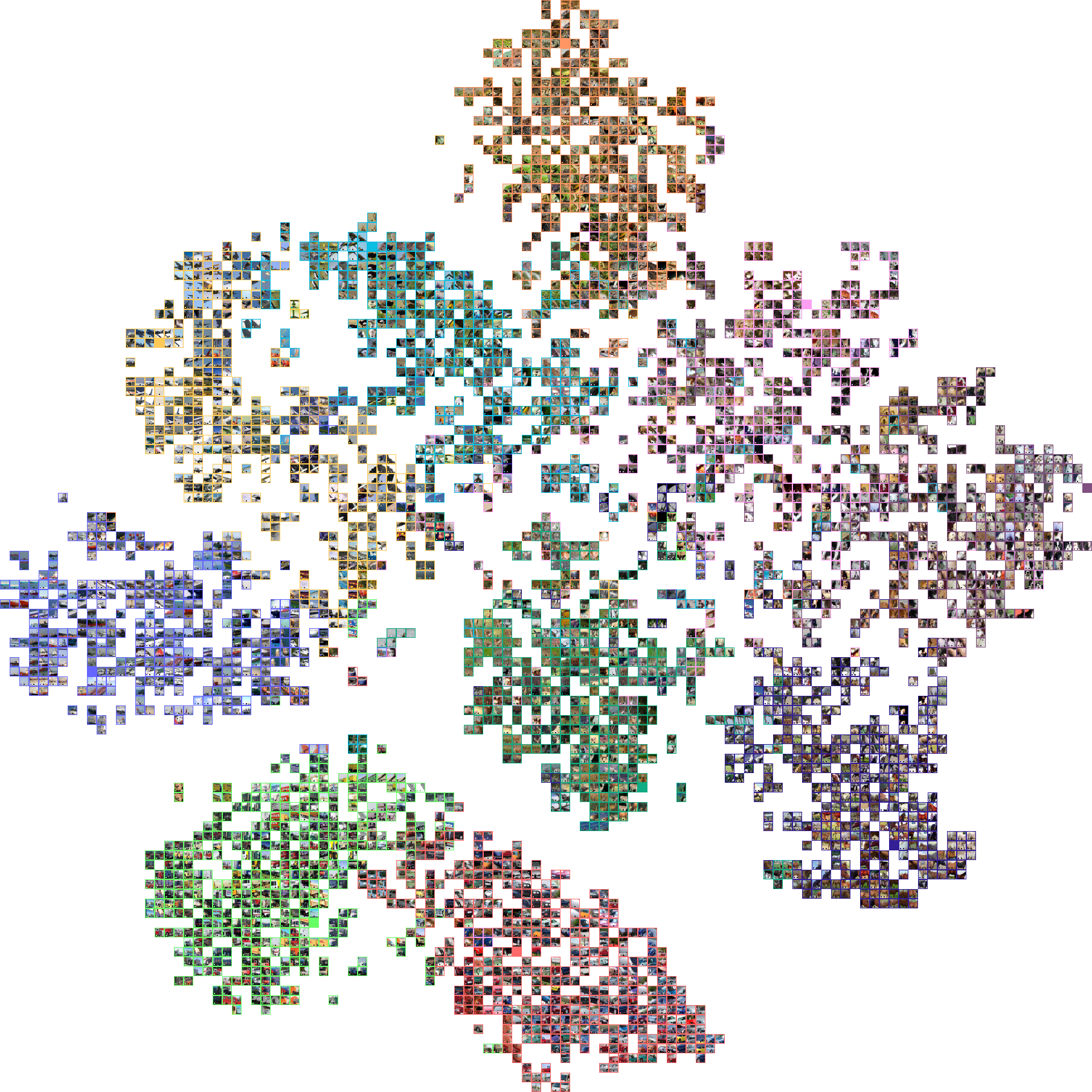}}
\end{minipage}
\begin{minipage}[b]{0.15\linewidth}
  \centering
  \centerline{\includegraphics[width=\linewidth]{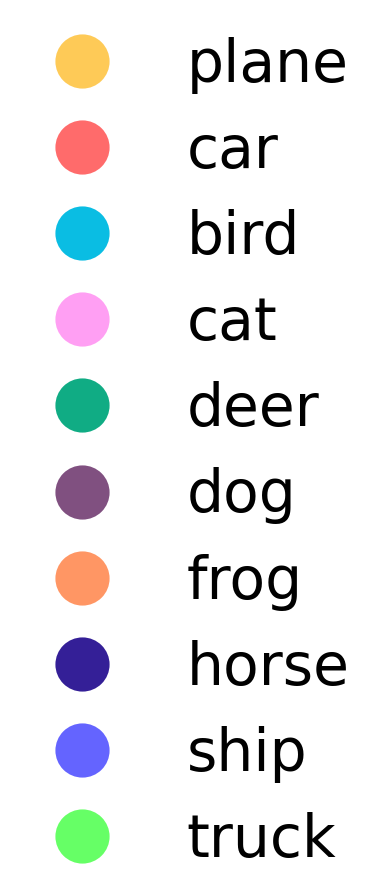}}
\end{minipage}
}
\caption{Patches embedded by 2D t-SNE with respect to their class predictions (left) and their embedding vectors (right). Magnified version is available in supplementary.}
\label{fig:embedding_geometry}
\end{wrapfigure}
\textbf{Embedding space geometry.} We first compare the geometry of the features just before the last stage. We provide the relevant 2D t-SNE projections in \cref{fig:geometry_comparison}. We observe that baseline RN's features are scattered across the space regardless of their higher level semantics. On the contrary, the features at the output of our block (\ie \emph{stage-4 input}) are clustered with respect to their semantics. In particular, \emph{animals} occupy the one half of the space whereas \emph{vehicles} lie in the other half. We further show that such behaviour is the result of \emph{value} vector embeddings. When we compare the features at the input and the output of our block (\ie \emph{stage-3 out} and \emph{stage-4 input}), we see that clustering occurs after our feature embedding, validating our mechanism of feature embedding by the matched semantics. That said, in \cref{fig:embedding_geometry}, we also plot the patches according to 2D t-SNE of their class predictions and the resultant embedding vectors as the weighted combination of the class \emph{value} vectors ($f^\prime$ in \cref{fig:method}). With class predictions, semantically similar patches are embedded apart (\eg \emph{car} and \emph{truck}). On the other hand, embedding vectors reshapes the geometry so that semantically similar entities are mapped close, yet another result supporting the effectiveness of \emph{feature embedding by template matching} mechanism.

\begin{wrapfigure}[11]{l}[-5pt]{0.45\linewidth}
\vspace{0\intextsep}
  \centering
  \centerline{\includegraphics[width=1.0\linewidth]{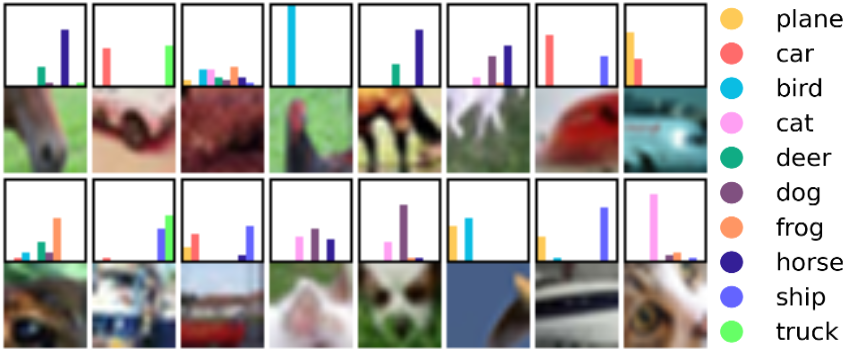}}
  \caption{Sample patches with their prediction scores stitched on the top.}
	\label{fig:words}
  \end{wrapfigure}

%

\textbf{Visual words}. To support our claims on generating vectors corresponding to new semantic entities from the combination of class vectors, we perform \emph{k-means} clustering with 100 centers of the class prediction scores. We then take the patches that are nearest to the centers. We provide 16 such patches in \cref{fig:words} together with their prediction scores. We observe that different combination of the classes means different semantic entities. For instance, \emph{wing} is generated by \emph{plane} and \emph{bird} classes, we have \emph{tire} as the combination of \emph{car} and \emph{truck}. We observe class-discriminative patches inheriting the class label. We as well observe more generic entities as the mixture of many classes such as \emph{fur} from \emph{animal} classes.

\section{Conclusion}
We reformulated convolution block based local feature embedding as feature assignment through best matching kernel and showed that $3\sxtimes 3$-$\mathrm{soft}$-$\mathrm{max}$-$1\sxtimes 1$ implements such a mechanism. Approximately relating $\mathrm{BN}$-$\mathrm{ReLU}$ to unnormalized soft-max, we brought a novel view point to $3\sxtimes 3$-$\mathrm{BN}$-$\mathrm{ReLU}$-$1\sxtimes 1$ which we encounter in popular ResNet-based models. Building on perspective explaining the bottom-up behavior of $3{\sxtimes}3$-BN-ReLU-$1{\sxtimes}1$ convolution block, we proposed a feature extraction mechanism that exploits weighted combination of class-semantic vectors to embed vector representation to the patches. We implemented such mechanism as a simple, yet effective residual layer. Our layer is learnable and effectively selects the classes that matches the patches most for feature embedding. We implemented our method with several architectures. With extensive empirical studies, we validated the effectiveness of our feature embedding layer as well as our theoretical claims.

\bibliography{library}

\appendix
\section*{Supplementary Material for "\textit{Feature Embedding by Template Matching as a ResNet Block}"}

\section{Architectural Details}
\label{sec:implementation}
\begin{table}[h]
	\centering
	\caption{Summary of the architectural choices including the input (in) and the output (out) feature dimensions, and the spatial reduction (reduction) of each stage for Cifar.}
	\label{tab:architectureCifar}
	\resizebox{\linewidth}{!}{%
    \begin{tabular}{lccccccccccccccc}
        \toprule
        \multicolumn{1}{l}{} & \multicolumn{3}{c|}{stage-1} & \multicolumn{3}{c|}{stage-2} & \multicolumn{3}{c|}{stage-3} & \multicolumn{3}{c|}{Our Block} & \multicolumn{3}{c}{stage-4} \\ \cmidrule{2-16}
        Architectures $\downarrow$ & \multicolumn{1}{c|}{in} & \multicolumn{1}{c|}{out} & \multicolumn{1}{c|}{reduction} & \multicolumn{1}{c|}{in} & \multicolumn{1}{c|}{out} & \multicolumn{1}{c|}{reduction} & \multicolumn{1}{c|}{in} & \multicolumn{1}{c|}{out} & \multicolumn{1}{c|}{reduction} & \multicolumn{1}{c|}{in} & \multicolumn{1}{c|}{out} & \multicolumn{1}{c|}{reduction} & \multicolumn{1}{c|}{in} & \multicolumn{1}{c}{out} & \multicolumn{1}{|c}{reduction}  \\ \midrule
        RN26 & 16 & 64 & $\downarrow$ 1 & 64 & 128 & $\downarrow$ 2 & 128 & 128 & $\downarrow$ 2 & - & - & - & 128 & 256 & $\downarrow$ 1 \\
        RN26-Ours & 16 & 64 & $\downarrow$ 1 & 64 & 128 & $\downarrow$ 2 & 128 & 128 & $\downarrow$ 2 & 128 & 128 & $\downarrow$ 1 & 128 & 256 & $\downarrow$ 1 \\ \midrule
        RN38 & 16 & 64 & $\downarrow$ 1 & 64 & 128 & $\downarrow$ 2 & 128 & 128 & $\downarrow$ 2 & - & - & - & 128 & 256 & $\downarrow$ 1 \\
        RN38-Ours & 16 & 64 & $\downarrow$ 1 & 64 & 128 & $\downarrow$ 2 & 128 & 128 & $\downarrow$ 2 & 128 & 128 & $\downarrow$ 1 & 128 & 256 & $\downarrow$ 1 \\ \midrule
        WRN16 & 16 & 32 & $\downarrow$ 1 & 32 & 64 & $\downarrow$ 2 & 64 & 128 & $\downarrow$ 2 & - & - & - & 128 & 128 & - \\
        WRN16-Ours & 16 & 32 & $\downarrow$ 1 & 32 & 64 & $\downarrow$ 2 & 64 & 128 & $\downarrow$ 2 & 128 & 128 & $\downarrow$ 1 & 128 & 128 & - \\ \midrule
        DN100 & 24 & 108 & $\downarrow$ 2 & 108 & 150 & $\downarrow$ 2 & 150 & 342 & - & - & - & - & 342 & 534 & - \\
        DN100-Ours & 24 & 108 & $\downarrow$ 2 & 108 & 150 & $\downarrow$ 2 & 150 & 342 & - & 342 & 342 & $\downarrow$ 1 & 342 & 534 & - \\
        DN100-Ours-C & 24 & 108 & $\downarrow$ 2 & 108 & 150 & $\downarrow$ 2 & 150 & 342 & - & 342 & 534 & - & 534 & 726 & - 
        \\\bottomrule
        \end{tabular}
    }
\end{table}
\begin{table}[h]
	\centering
	\caption{Summary of the architectural choices including the input (in) and the output (out) feature dimensions, and the spatial reduction (reduction) of each stage for Mini-Imagenet.}
	\label{tab:architectureImagenet}
	\resizebox{\linewidth}{!}{%
    \begin{tabular}{lccccccccccccccc}
        \toprule
        \multicolumn{1}{l}{} & \multicolumn{3}{c|}{stage-1} & \multicolumn{3}{c|}{stage-2} & \multicolumn{3}{c|}{stage-3} & \multicolumn{3}{c|}{Our Block} & \multicolumn{3}{c}{stage-4} \\ \cmidrule{2-16}
        Architectures $\downarrow$ & \multicolumn{1}{c|}{in} & \multicolumn{1}{c|}{out} & \multicolumn{1}{c|}{reduction} & \multicolumn{1}{c|}{in} & \multicolumn{1}{c|}{out} & \multicolumn{1}{c|}{reduction} & \multicolumn{1}{c|}{in} & \multicolumn{1}{c|}{out} & \multicolumn{1}{c|}{reduction} & \multicolumn{1}{c|}{in} & \multicolumn{1}{c|}{out} & \multicolumn{1}{c|}{reduction} & \multicolumn{1}{c|}{in} & \multicolumn{1}{c}{out} & \multicolumn{1}{|c}{reduction}  \\ \midrule
        RN26 & 16 & 64 & $\downarrow$ 2 & 64 & 128 & $\downarrow$ 2 & 128 & 128 & $\downarrow$ 2 & - & - & - & 128 & 256 & $\downarrow$ 1 \\
        RN26-Ours & 16 & 64 & $\downarrow$ 2 & 64 & 128 & $\downarrow$ 2 & 128 & 128 & $\downarrow$ 2 & 128 & 128 & $\downarrow$ 1 & 128 & 256 & $\downarrow$ 1 \\ \midrule
        RN38 & 16 & 64 & $\downarrow$ 2 & 64 & 128 & $\downarrow$ 2 & 128 & 128 & $\downarrow$ 2 & - & - & - & 128 & 256 & $\downarrow$ 1 \\
        RN38-Ours & 16 & 64 & $\downarrow$ 2 & 64 & 128 & $\downarrow$ 2 & 128 & 128 & $\downarrow$ 2 & 128 & 128 & $\downarrow$ 1 & 128 & 256 & $\downarrow$ 1 \\ \midrule
        WRN16 & 16 & 32 & $\downarrow$ 2 & 32 & 64 & $\downarrow$ 2 & 64 & 128 & $\downarrow$ 2 & - & - & - & 128 & 128 & - \\
        WRN16-Ours & 16 & 32 & $\downarrow$ 2 & 32 & 64 & $\downarrow$ 2 & 64 & 128 & $\downarrow$ 2 & 128 & 128 & $\downarrow$ 1 & 128 & 128 & - \\ \midrule
        DN100 & 24 & 108 & $\downarrow$ 2 & 108 & 150 & $\downarrow$ 2 & 150 & 171 & $\downarrow$ 2 & - & - & - & 171 & 363 & - \\
        DN100-Ours & 24 & 108 & $\downarrow$ 2 & 108 & 150 & $\downarrow$ 2 & 150 & 171 & $\downarrow$ 2 & 171 & 171 & $\downarrow$ 1 & 171 & 363 & - \\
        DN100-Ours-C & 24 & 108 & $\downarrow$ 2 & 108 & 150 & $\downarrow$ 2 & 150 & 171 & $\downarrow$ 2 & 171 & 363 & - & 363 & 555 & - 
        \\\bottomrule
        \end{tabular}
    }
\end{table}

We provide details of the architectural choices for the baseline methods for the sake of reproducibility of our experimental work. We use ResNet (RN) \cite{he2016identity}, Wide-ResNet (WRN) \cite{zagoruyko2016wide}, and DenseNet (DN) \cite{huang2017densely} as the baseline architectures. We use 4 \emph{stages} for each architecture. Note that the implementation of the \emph{stage} differs from method to method as we will disclose shortly. 

\textbf{ResNet (RN).} We stick to the original implementation of ResNet v2 \cite{he2016identity} including the combination of convolution, batch normalization (BN) and ReLU layers at the start of the first stage. In our notation, a typical RN v2 \emph{stage} includes multiple residual blocks which are called \emph{bottleneck residual units}. The first block of each stage perform  $1{\sxtimes}1$ convolution in the shortcut connection. For the stages that perform spatial reduction, the stride of that convolution is 2. We use an additional stage (stage-4) to incorporate our method easily during implementation. We perform experiments with two RN architectures with the number of blocks for each stage being 2 (RN26) and 3 (RN38), respectively. We summarize the architecture details in \cref{tab:architectureCifar,tab:architectureImagenet} for Cifar \cite{krizhevsky2009learning} and Mini-Imagenet \cite{Ravi2017OptimizationAA}, respectively. $\downarrow k$ in reduction means we have $1{\sxtimes}1$ convolution with stride $k$ in the shortcut connection before addition, and $-$ means direct shortcut connection. Only for Cifar 10, we find that using an additional $2{\sxtimes}2$ average pooling in the shortcut before the $1{\sxtimes}1$ convolution layer (\emph{i.e.}, linear transform) better generalizes the incoming features. Moreover, we use the output of the BN as the input to the \emph{soft-max} operation to shape the softness of the \emph{soft-max} predictions. With that being said, one can use temperature scaling to logits instead. Yet BN performs such a scaling inherently since it provides us with scaled and normalized activations. Hence we do not have to choose the temperature manually. Such tricks bring marginal improvements to the performance in Cifar 10.

\textbf{Wide-ResNet (WRN).} We stick to the original implementation of WRN \cite{zagoruyko2016wide} including a single convolution layer at the start of the first stage. Similar to ResNet, a typical WRN \emph{stage} includes multiple residual blocks which are called \emph{basic residual architecture} in the original paper \cite{zagoruyko2016wide}. For the stages that perform spatial reduction, the first block includes $1{\sxtimes}1$ convolution with stride 2 in the shortcut connection. If the channel dimensions of the input and the output features are not the same for that stage, the first block again includes $1{\sxtimes}1$ convolution with stride 1 in the shortcut connection. We use an additional stage (stage-4) to incorporate our method easily during implementation. We use WRN of \emph{depth} 16 and \emph{widening factor} 2. Namely, the depth of the each stage is computed so that the total depth is 16. We do not use \emph{dropout}. We summarize the architecture details in \cref{tab:architectureCifar,tab:architectureImagenet} for Cifar and Mini-Imagenet, respectively. $\downarrow k$ in reduction means we have $1{\sxtimes}1$ convolution with stride $k$ in the shortcut connection before addition, and $-$ means direct shortcut connection. In our block, we use an extra BN before $4{\sxtimes}4$ average pooling owing to the slight architectural differences of WRN from RN (In fact, we are doing that to make the internal classification stage of the patches more similar to the final classification stage of the original network). Only for Cifar 10, we use the same implementation tricks as in RN.

\textbf{DenseNet (DN).} We stick to the original implementation of DN-BC \cite{huang2017densely} including a single convolution layer at the start of the first stage. In the context of DN, a typical \emph{stage} includes a multiple-layered \emph{dense block} \cite{huang2017densely}, and a \emph{transition layer} \cite{huang2017densely} if reduction is specified. We use \emph{bottleneck implementation} \cite{huang2017densely} in dense blocks with 0.5 \emph{compression factor} \cite{huang2017densely}  at the transition layers since we are using DN-BC. The \emph{compression factor} reduces the channel dimension by the specified factor. We use an additional stage (stage-4) to incorporate our method easily during implementation. We use DN of \emph{depth} 100 and \emph{growth rate} 12. Namely, the depth of the each stage is computed so that the total depth is 100. For our method, we additionally employ concatenation of the embedded feature and the input feature instead of addition through shortcut to align with the architectural design of DN, which is referred as DN100-Ours-C. For DN100-Ours-C, we find that using value vectors of the half dimension of the input gives good results. Thus, we use 192 dimensional \emph{value} vectors (\ie 192-many $1{\sxtimes}1$ convolutions for embedding) and concatenate them with the corresponding input. Aligned with the baseline architecture, we do not use $1\sxtimes 1$ convolution in the shortcut connection.
We summarize the architecture details in \cref{tab:architectureCifar,tab:architectureImagenet} for Cifar and Mini-Imagenet, respectively. $\downarrow 2$ in reduction means we have the \emph{transition layer} in between DN stages, $\downarrow 1$ means we have $1{\sxtimes}1$ convolution in our shortcut connection, and $-$ means direct connection without any convolution.  Similar to WRN, we use an extra BN before $4{\sxtimes}4$ average pooling in our block. Only for Cifar 10, we use the same implementation tricks as in RN and WRN except that we do not use $2{\sxtimes}2$ average pooling in the shortcut since it results in over smoothing considering the transition layers also inheriting $2{\sxtimes}2$ average pooling.



\section{Magnified Figures and Discussion}
\label{sec:analysis}

We provide magnified visualizations of class predictions and embedding vectors of patches in \cref{fig:softmax,fig:value}, a summary of which is already included in the main paper. Specifically, we generate a sprite image of the patches, where each patch is embedded with respect to its class prediction vector (\cref{fig:softmax}) or embedding vector as the convex combination of class embedding, \ie \emph{value}, vectors (\cref{fig:value}). We enhance patch images with further visual aids as illustrated in  \cref{fig:convention}, in which the color in the frame represents the true class, the colored box in the left corner represents the final predicted class coming from the classifier and the grayish-filled box in the middle represents the entropy calculated from the \emph{soft-max} predictions of our block for each patch image extracted. We especially use entropy calculation to understand how peaky or how uniform the \emph{soft-max} predictions are to further interpret the results. The color of the entropy indicator box goes darker as the entropy goes lower and vice versa. 

\begin{figure}[!ht]
  \centering
  \centerline{\includegraphics[width=1.0\linewidth]{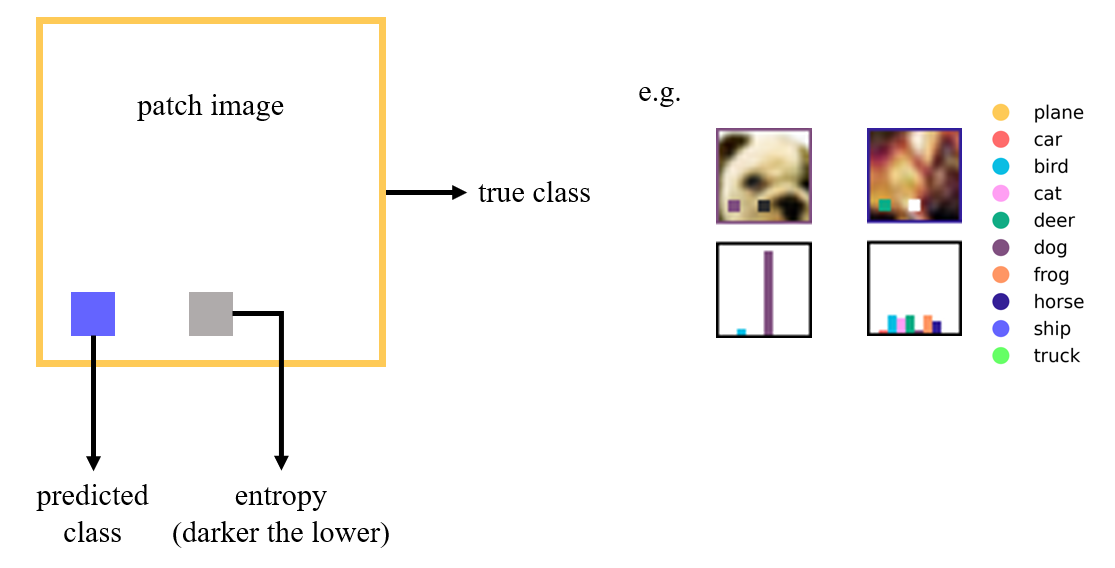}}
  \caption{The convention used in the visualization of 2D t-SNE projections of the features.}
	\label{fig:convention}
  \end{figure}

%

Showcasing the patch image convention, we include two examples in \cref{fig:convention} with their corresponding histograms obtained from the \emph{soft-max} predictions of our block, which are also used in the entropy calculation. These examples consist of one patch image having a \emph{dog face} and the other one having an \emph{animal body}. Taking the \emph{dog} image for instance, our \emph{soft-max} predictions (histogram) have a peak at \emph{dog} class and the final classifier (the box in the left corner) as well predicts the class \emph{dog}, which can be seen from its color. This color also matches with the frame color, indicating that we make the correct assignment for the image from which the patch is extracted. Moreover, since the histogram is very peaky, we have a very low entropy. Hence, we have a darker colored box in the middle as expected. As an example of another case, we predict the wrong class in the final classifier for the image from which the patch with an \emph{animal body} is extracted. Note that in that case the color of the frame and the box in the left corner do not match. We also have a relatively higher entropy which is indicated by the brightness of the middle box. We indeed expect such kind of results for the patches having semantic entities which are shared among the classes. That being said, we see from the corresponding histogram that the non-zero histogram bins only come from \emph{animal} classes as expected. Due to the structure of the body and the combination of the other corresponding patch images at the final classification stage, the final prediction becomes the class \emph{deer} instead of \emph{horse}, which are close species in nature.


For \cref{fig:value}, we additionally embed the class value vectors as the images filled with solid colors corresponding to classes. We also embed 0-$vector$ as a black filled image. We observe that value vectors can be considered as the vertices of the convex hull of the embedded features. Hence, their convex combination creates the corresponding embedding vectors.

Once we look at the origin (black box) by zooming in the image, we see that the patch images nearby have larger entropy compared to the ones away from the 0-$vector$. In other words, the patches of high entropy are assigned to 0-$vector$. Such behavior is not surprising since we believe the patches of high entropy (\ie shared among many classes) should not carry too much information. These patches generally include shared nuisance information than discriminative patterns such as \emph{beak}, \emph{ear} and \emph{wing}. Similarly, we observe relatively higher entropy of the predictions in the transition between classes such as \emph{bird} and \emph{plane}, \emph{car} and \emph{truck} or \emph{plane} and \emph{ship}. These passage points represent mutual semantic entities for those classes, such as \emph{wing} for \emph{bird} and \emph{plane}, \emph{tire} for \emph{car} and \emph{truck} or \emph{blue background} for \emph{plane} and \emph{ship}, yet another supporting result for our claims on combining class labels to generate novel labels corresponding to different semantic entities. For the discriminative entities (\ie the ones nearby the class value vectors), the distinction between the classes are more clear, resulting in smaller entropy.


\newpage
\begin{figure}[!ht]
\begin{minipage}[b]{0.91\linewidth}
  \centering
  \centerline{\includegraphics[width=\linewidth]{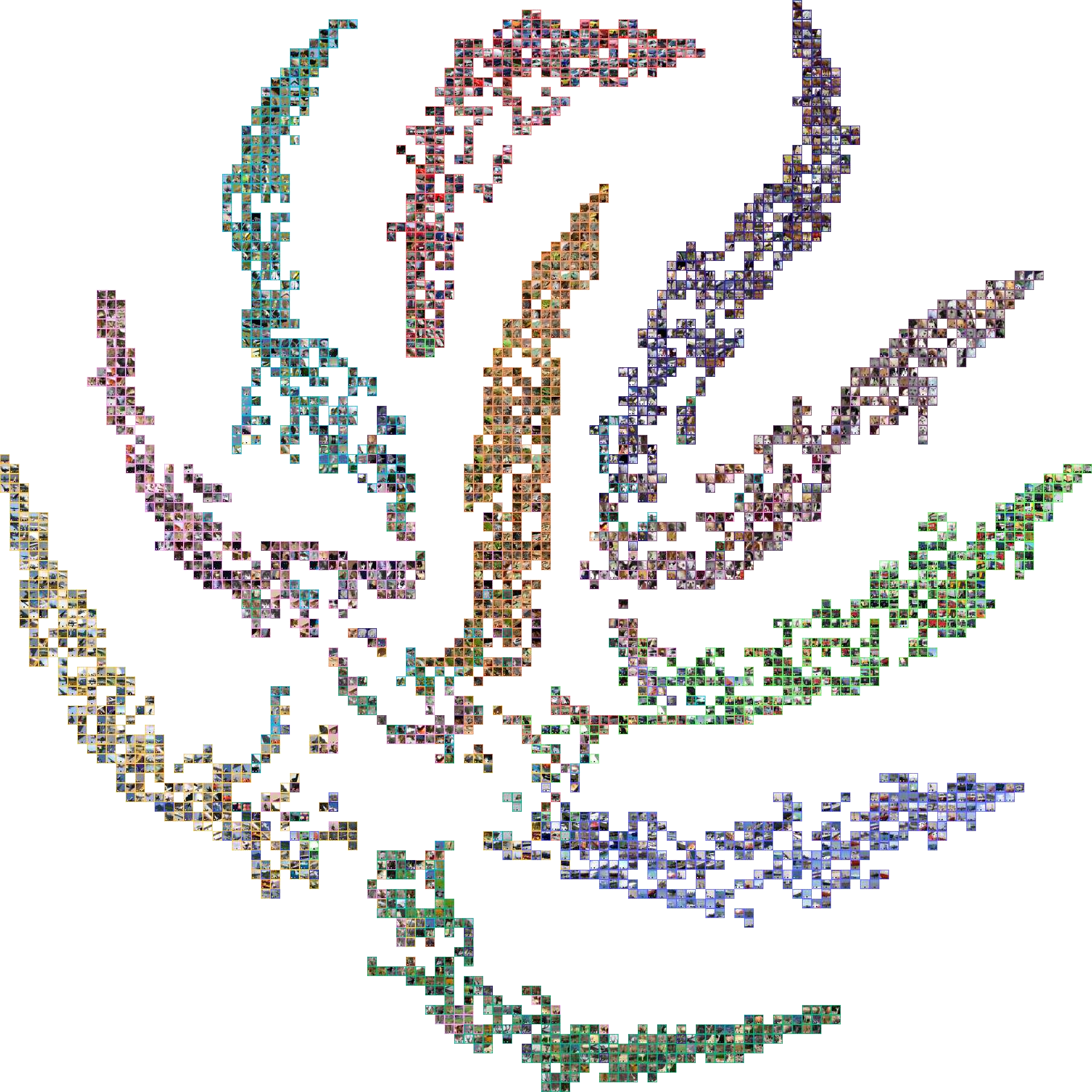}}
\end{minipage}
\begin{minipage}[b]{.08\linewidth}
  \centering
  \centerline{\includegraphics[angle=0,origin=c, width=\linewidth]{figures/legend_final.png}}
\end{minipage}

  \caption{Patches embedded by 2D t-SNE with respect to their class predictions.}
	\label{fig:softmax}
  \end{figure}
  
\newpage

\begin{figure}[!ht]
    
    \begin{minipage}[b]{0.91\linewidth}
      \centering
      \centerline{\includegraphics[width=\linewidth]{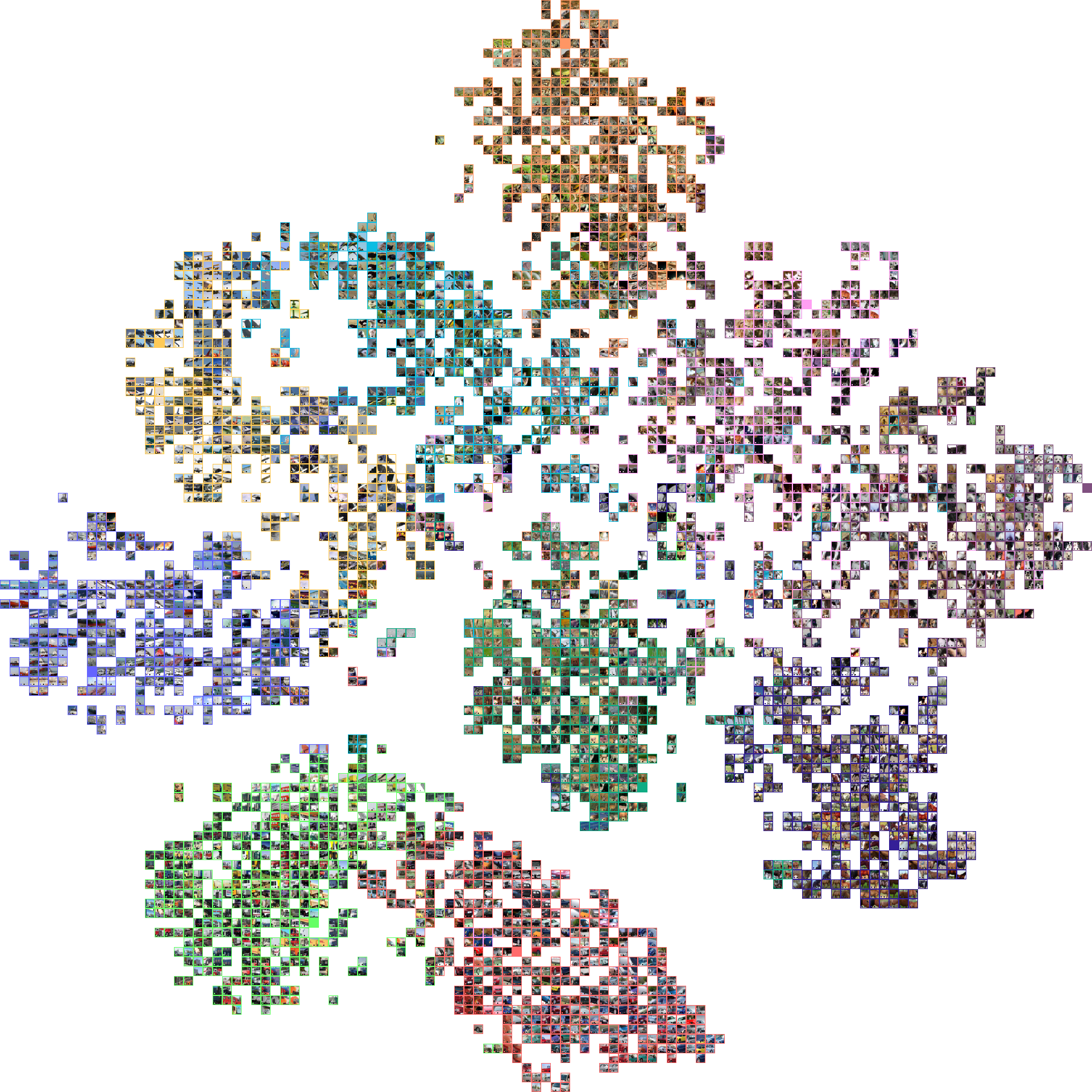}}
    \end{minipage}
    \begin{minipage}[b]{.08\linewidth}
      \centering
      \centerline{\includegraphics[angle=0,origin=c, width=\linewidth]{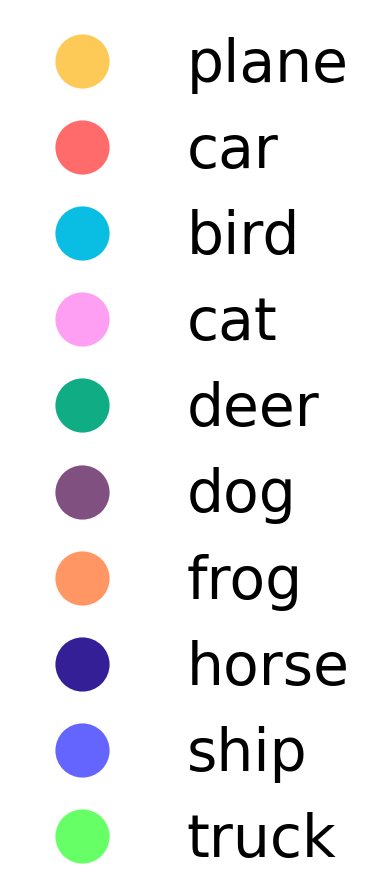}}
    \end{minipage}

  \caption{Patches embedded by 2D t-SNE with respect to convex combination of class embedding vectors.}
	\label{fig:value}
  \end{figure}
%

\end{document}